\def\BibTeX{{\rm B\kern-.05em{\sc i\kern-.025em b}\kern-.08em
    T\kern-.1667em\lower.7ex\hbox{E}\kern-.125emX}}
\newcommand{\isep}{\mathrel{{.}\,{.}}\nobreak}
\newtheorem{theorem}{Theorem}[section]
\newtheorem{conjecture}{Conjecture}[section]
\newtheorem{remark}[theorem]{Remark}
\theoremstyle{definition}
\newtheorem{definition}{Definition}[section]
\newcommand{\pn}{$PN$\xspace}
\newcommand{\dn}{$DN$\xspace}
\begin{document}
\IEEEoverridecommandlockouts

\title{Towards solving the 7-in-a-row game\\
\thanks{This work was supported by the European Union, co-financed by
  the European Social Fund (EFOP-3.6.3-VEKOP-16-2017-00002), the ERC DYNASNET Grant (ERC-2018-SYG 810115), as well as by the Hungarian National Excellence Grant
  2018-1.2.1-NKP-00008. It was also supported by the Hungarian Ministry of
  Innovation and Technology NRDI Office within the framework of the
  Artificial Intelligence National Laboratory Program.}
}

\author{\IEEEauthorblockN{1\textsuperscript{st} Domonkos Czifra}
\IEEEauthorblockA{\textit{Alfr\'{e}d R\'{e}nyi Institute of Mathematics} \\
Budapest, Hungary
}
\and
\IEEEauthorblockN{2\textsuperscript{nd} Endre Cs\'{o}ka}
\IEEEauthorblockA{\textit{Alfr\'{e}d R\'{e}nyi Institute of
    Mathematics} \\
Budapest, Hungary
}
\and
\IEEEauthorblockN{3\textsuperscript{rd} Zsolt Zombori}
\IEEEauthorblockA{\textit{Alfr\'{e}d R\'{e}nyi Institute of
    Mathematics} \\
\textit{E{\"o}tv{\"o}s Lor\'{a}nd University}\\
Budapest, Hungary
}
\and
\IEEEauthorblockN{4\textsuperscript{th} G\'{e}za Makay}
\IEEEauthorblockA{\textit{University of Szeged} \\
Szeged, Hungary
}
}

\maketitle

\begin{abstract}
Our paper explores the game theoretic value of the 7-in-a-row game. We reduce the problem to solving a finite board game, which we target using Proof Number Search. We present a number of heuristic improvements to Proof Number Search and examine their effect within the context of this particular game. Although our paper does not solve the 7-in-a-row game, our experiments indicate that we have made significant progress towards it.
\end{abstract}

\begin{IEEEkeywords}
proof number search, (m,n,k)-games, artificial intelligence, game theoretic value
\end{IEEEkeywords}

\section{Introduction}
\label{sec:introduction}

Our paper explores how Proof Number Search (PNS) can be adapted to prove the infinite 7-in-a row game, whose game theoretic value has long been open. This game belongs to the family of $(m,n,k)$--games -- a generalisation of Gomoku and Tic-tac-toe. In these games two players take turns in placing a stone of their color on a rectangular board of size $m \times n$ (where both $m$ and $n$ can be
infinite), the winner being the player who first gets $k$ stones of
their own color in a row, horizontally, vertically, or
diagonally. We focus on a weak variant of $(m,n,k)$--games, called the maker-breaker setup, where the second player (breaker) cannot win by collecting $k$ stones in a row, hence its objective is to prevent the first player (maker) from winning.

On the theoretical side, we present a tiling technique that can be used to prove that breaker wins on an infinite board by partitioning the board into finite pieces and generalising the breaker strategy on the small boards to the infinite one.

Afterwards, we search for a breaker win strategy on the finite board using PNS, a technique that has already been successfully applied to several board games, e.g. Gomoku, Hex, and Go. PNS benefits from the non-uniform branching factor of the AND-OR game tree. Exploiting domain specific knowledge of $(m,n,k)$--games, we develop several methods that increase this non-uniformity, which further reduces the search space and increases the computational gain of PNS compared to Alpha-Beta~\cite{Bundy1984} pruning in many scenarios. 

Our methods can be grouped into three categories. The first category concerns the reduction of the search space, such as early recognition of winning states, recognition of mandatory moves,  and partitioning of the board. The second category is about identifying isomorphic states. Finally, the third category of heuristics guides the traversal of the search space by overriding the static initialization rule of the proof and disproof number values with heuristic ones. Our initialization uses a simple combination of heuristic features and parameters learned from previously proven states, foreshadowing the potential of enhancing other board game solvers with machine learning. Our paper presents a quantitative evaluation of the effect of these changes on the search space.

The 7-in-a-row game corresponds to the $(\infty, \infty, 7)$--game and our paper presents work geared towards proving the conjecture that its game theoretic value is a draw. Our contribution can be summarized as follows:
\begin{itemize}
    \item We present a tiling technique that allows us to reduce the infinite board $(\infty,\infty,7)$--game to (infinitely many) independent finite $(4,n,7^{tr})$--games for some fixed $n$.
    \item We incorporate various search heuristics specific to $(m,n,k)$--games into PNS. In particular, we introduce three methods that are -- to the best of our knowledge -- novel: 1) isomorphy detection, 2) breaking the board into components, 3) heuristic proof number and disproof number initialization.
    \item  We empirically evaluate each of our methods.
    \item We prove that the $(4,n,7^{tr})$--game is maker win for $n \leq 14$. Our experiments, however, suggest that as $n$ increases, the closer we get to a breaker win situation, leading to the conjecture that there is a $n_0$ where the game theoretic value flips, i.e. the game is maker win for $n < n_0$ and breaker win for $n \geq n_0$.
\end{itemize}


\section{Background and Related Work}
\label{sec:background}

\subsection{$(m,n,k)$--games}
$(m,n,k)$--games are played on an $m \times n$ board, where two players take turns in marking one of the empty fields of the board. The player, who can collect $k$ marks in a row (horizontally, vertically, or diagonally) wins the game. $(m,n,k)$--games belong to \emph{positional games} \cite{BECK1981117}, defined more abstractly, as follows. Let $H=(V,E)$ be a hypergraph. The two players take turns to mark a node with their color and the winner is the player who first colors an entire hyperedge with his color. In particular, an $(m,n,k)$--game is a positional game where $V:=\{v \mid v \in m \times n \text{ board}\}$ and $E$ contains all horizontal, vertical and diagonal lines of length $k$. 

A player is said to have a \emph{winning strategy} if it can always win, regardless of the opponent's strategy. A player has a \emph{draw strategy} if the other player does not have a winning strategy. Accordingly, the \emph{game theoretic value} of a game can be 1) \emph{first player win}, 2) \emph{second player win}, or 3) \emph{draw}.

The \emph{strategy stealing} argument can be used to show that the second player cannot have a winning strategy: if it had, the first player could start with an arbitrary move and then mimic the second player's strategy to win, leading to contradiction. This motivates a weaker version of the $(m, n, k)$--games,
called \emph{maker-breaker} setup, in which the aim of the second player (breaker) is to prevent the first player (maker) from winning, i.e., breaker is not rewarded by collecting an entire hyperedge. The game theoretic value in the maker-breaker setup is 1) \emph{maker win} if the first player has a winning strategy or 2) \emph{breaker win} otherwise.

The following observations are easy to prove. If maker wins some $(m,n,k)$--game, then its winning strategy directly applies to any games with greater $m$ or $n$, or smaller $k$. If breaker wins some $(m,n,k)$--game, then it also wins if $m$ or $n$ are decreased, or $k$ is increased. 
If first player wins some $(m,n,k)$--game, then it also holds that maker wins that game. Conversely, if breaker wins, then the game is a draw. It is, however, possible that the maker-breaker variant is a maker win but the original game is a draw: for example, the $(3,3,3)$--game (or Tic Tac Toe).

Several games have been proven to be draws, e.g. the $(5,5,4)$--game by \cite{BerlekampElwynR1983Wwfy}, the $(6,6,5)$--game by \cite{UITERWIJK200043}, and the $(7,7,5)$--game by \cite{gomoku775}. Recently, \cite{HSU202079} prove that the $(8,8,5)$-game is a draw as well. On the other hand, \cite{Allis1994SearchingFS} show that first player wins the $(15,15,5)$--game, also called \emph{Gomoku}.\footnote{Gomoku is also played on a $19 \times 19$ board, but the $(19, 19, 5)$--game is still unsolved.}

In the maker-breaker setup, a maker-color in some square corresponds to removing that square from all hyperedges. In contrast, a breaker-color in a square corresponds to removing all hyperedges containing that square. Hence, each move can be seen to make the hypergraph smaller. We introduce \emph{l-lines}, to characterize the active parts of the board:

\begin{definition}[l-line]
An \emph{l-line} is a hyperedge which contains no breaker-colored squares and exactly $l$ empty squares.
\end{definition}

We use l-lines to define an aggregate statistic board measure, called \emph{potential}, which will be crucial for developing good search heuristics. \cite{BECK1981117} already introduces potential and it is used in several works on $(m,n,k)$--games.

\begin{definition}[Potential]
Suppose board $b$ contains $x_l$ different $l$-lines for $l \in 1 \dots k$. The \emph{potential} of $b$ is
$$pot(b) = \sum_{l=1}^k x_l \cdot 2^{-(l-1)}$$
\end{definition}

\subsection{7-in-a-row game}
The 7-in-a-row game is an $(m,n,k)$--game, where $m,n=\infty$, representing the board $\mathbb{Z} \times \mathbb{Z}$ and $k=7$, hence it can be witten as the $(\infty,\infty,7)$--game. \cite{BerlekampElwynR1983Wwfy} proves that the $(\infty,\infty,9)$--game is a breaker win and \cite{inf_inf_8} proves that the $(\infty,\infty,8)$--game is a breaker win, as well. \cite{Allis1994SearchingFS} proves that first player wins the $(15,15,5)$--game, which implies that maker wins the $(\infty,\infty,5)$--game.\footnote{However, it does not imply that the $(\infty,\infty,5)$--game is first player win, which is still open.} These results imply that the $(\infty,\infty,k)$--game is maker win for $k \leq 5$ and breaker win for $k \geq 8$. The cases $k \in \{6,7\}$ are unknown, generally conjectured to be both breaker win.
Our primary objective in this paper is to build techniques and intuition towards proving that the $(\infty,\infty,7)$--game is a breaker win, and hence a draw.

\subsection{Proof Number Search}
\emph{Proof number search} (PNS) is a widely used alogorithm for solving games \cite{PNS_base}, \cite{hex_8x8}. It is based on \emph{conspiracy number search} \cite{conspiracy}, which proceeds in a minimax tree into the direction where the least number of leafs must be changed in order to change the root's minimax value by a given amount. PNS follows a similar strategy, applied to AND/OR trees: it proceeds into the direction where the given node can be proven with the least effort.

\begin{definition}[proof/disproof number]
Given a rooted AND/OR tree with root $r$. The proof/disproof number (\pn/\dn) is the minimum number of descendent leafs, which need to be proven/disproven in order to prove/disprove $r$. If $r$ is a leaf 
then the proof and disproof numbers are by definition 1.
\end{definition}

It follows, that
\begin{align*}
\left\{
	\begin{array}{ll}
		\pn=0,\, \dn=\infty  & \mbox{if \textit{r} is proven}\\
		\pn=\infty,\, \dn=0  & \mbox{if \textit{r} is disproven}\\
	\end{array}
\right.
\end{align*}

In order to prove an OR node, we only need to prove one of it's children, but to disprove it, we need to disprove all of it's children.
The contrary is true for AND nodes.
This implies that the proof numbers and disproof numbers can be computed recursively:
\begin{align*}
PN(r) =
\left\{
	\begin{array}{ll}
		\displaystyle \min_{ch: \:  children(r)}\pn(ch)  & \mbox{if \textit{r} is an OR node}\\
		\displaystyle \sum_{ch: \:  children(r)}\pn(ch)  & \mbox{if \textit{r} is an AND node}\\
	\end{array}
\right.
\end{align*}
\begin{align*}
DN(r) =
\left\{
	\begin{array}{ll}
		\displaystyle \sum_{ch: \:  children(r)}\dn(ch)  & \mbox{if \textit{r} is an OR node}\\
		\displaystyle \min_{ch: \:  children(r)}\dn(ch)  & \mbox{if \textit{r} is an AND node}\\
	\end{array}
\right.
\end{align*}

Plain PNS is a best-first algorithm: it selects iteratively the most promising leaf, and extends it, until the root is proven/disproven.
Finding the most promising leaf is computed by starting from the root, and choosing iteratively the branch which may need the least effort to prove.
We measure this effort by the \pn,\dn values: we choose the child with minimal \pn value at OR nodes, and minimal \dn value at AND nodes. For further details of PNS and its variants see \cite{PNS_variants}.

\subsection{Search space reduction techniques}
Solving a game typically involves traversing a large space, hence success is heavily dependent on techniques that reduce search. In the following, we summarize the heuristics that have been successfully applied to solve $(m,n,k)$--games.

In any game with confluent branches, i.e., when different move sequences can result in identical game state, one can save a lot of computation by collapsing identical states, i.e., turning the search tree into a directed acyclic graph (DAG). This is typically implemented using a transposition table (see e.g. \cite{pns_transposition}). \cite{Allis1994SearchingFS} and \cite{HSU202079} both report using transposition tables.

Threat space search \cite{Allis1994SearchingFS} revolves around the observation that in situations where the non-current player has an immediate win option -- a \emph{threat} -- the current player is forced to block that move, hence its effective branching factor is reduced to 1. Any strategy that creates threats has the potential to greatly reduce the proof search effort. While threats have proven to be very useful for proving first player (maker) victory, their use is less clear for proving draw.

A relevance-zone, also called R-zone is a generalisation of threats and captures the part of the game board in which a player has to move into in order to win. Identifying R-zones often allows for reducing the branching factor. R-zones have been used to speed up proof search both in maker win games \cite{connect_mnkpq,relevance_zone} and breaker win games \cite{HSU202079}.

Heuristic ordering of game states allows for first exploring more promising moves. A more promising move is more likely to lead to victory, after which there is no need to explore alternative moves. \cite{Allis1994SearchingFS} use simple manual heuristics to select the top best moves of first player. \cite{HSU202079} uses board potential to order moves. In the context of PNS, heuristic ordering can be implemented via better initialization of \pn/\dn values in leaf nodes: instead of 1 it can be an estimate of how many descendents of the leaf need to be proven/disproven in order to prove/disprove it. A perfect estimate would ensure that PNS finds the smallest solution tree without extending any node outside of the solution. While such perfect estimate is infeasible, in many scenarios we have a more accurate estimate than 1. \cite{Allis1994SearchingFS} use $1+d/2$ for initial \pn/\dn values, where $d$ is the depth of the leaf, encouraging more shallow search. 

Another simple but important heuristic is to eliminate squares that are not contained in any l-line: all hyperedges that contain such squares are already blocked by breaker so neither player benefits from moving there.

Pairing strategies \cite{Hales1987, BerlekampElwynR1983Wwfy,UITERWIJK200043,inf_inf_8} yield a useful tool for proving that a board position is a draw or breaker win. A pairing strategy is a set of pairwise disjunct pairs of vertices, such that each hyperedge contains at least one pair. Such pairing does not necessarily exist, but if it does, it can be shown that breaker can win the game: it can block all hyperedges by always marking vertex $v$ after maker has selected the pair of $v$. Identifying a pairing strategy is a useful method for early termination of proof search. When a pairing strategy is not available, partial pairings can be used to eliminate parts of the hypergraph. Given hypergraph $(V, E)$ and some subset $V_{pair} \subseteq V$, let $E_{pair} \subseteq E$ denote the hyperedges restricted to $V_{pair}$. If $(V_{pair}, E_{pair})$ contains a pairing strategy, then the proof theoretical value of $(V, E)$ is the same as that of $(V - V_{pair}, E - E_{pair})$ (\cite{HSU202079}).

Vertex domination was proposed in \cite{HSU202079}. We say that vertex $v_i$ dominates vertex $v_j$ if $E(v_i) \supseteq E(v_j)$. \cite{HSU202079} proves that if $v_i$ dominates $v_j$, then we can always select $v_i$ instead of $v_j$. Furthermore, if two vertices mutually dominate each other, then they form a partial pairing and hence can be removed, along with the containing hyperedges.

Board potential provides another powerful technique to discover that breaker has won the game.

\begin{theorem}
\label{thm:breaker_win}
Consider board $b$, with breaker moving next. If $pot(b) < 1$ then breaker wins in $b$.
\end{theorem}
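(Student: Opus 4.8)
The plan is to prove this by an Erdős--Selfridge style weight-function (potential) argument, showing that under a suitable greedy breaker strategy the quantity $pot$ never exceeds its initial value. First I would reinterpret the potential as a sum of per-line weights: an active line (one with no breaker mark) having $l$ empty squares carries weight $2^{-(l-1)}$, and for an empty square $s$ I define its \emph{danger} $D(s) = \sum_{e \ni s} 2^{-(l_e-1)}$, the total weight of all active lines through $s$, where $l_e$ denotes the number of empty squares of line $e$. The strategy I would analyze is greedy: on each of its turns breaker marks an empty square $s$ maximizing $D(\cdot)$.

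The first observation is that the hypothesis $pot(b) < 1$ forbids any $1$-line, since a single $1$-line already contributes $2^{-(1-1)} = 1$ to the sum. Consequently, whenever the potential is strictly below $1$, maker (breaker's opponent) has no line with a single empty square and therefore cannot complete a line on its next move, i.e. there is no immediate maker threat.

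The heart of the argument is a one-round bookkeeping showing the potential cannot rise over a breaker--maker pair of moves. When breaker marks $s$, every active line through $s$ acquires a breaker stone and leaves the sum entirely, so the potential drops by exactly $D(s)$. When maker then marks an empty square $t$, each still-active line through $t$ loses one empty square and hence exactly doubles its weight; the total increase therefore equals the \emph{post-breaker} danger of $t$, which is at most the pre-breaker danger $D(t)$, because breaker's move can only delete lines and never changes the empty-count of a surviving line. Since breaker chose $s$ to maximize danger, $D(t) \le D(s)$, so maker's increase is bounded by breaker's preceding decrease, and over the full round $pot$ does not increase.

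Combining these, I would induct over rounds. The potential is $pot(b) < 1$ at breaker's first turn; the round estimate keeps it below $1$ at the start of every subsequent breaker turn, and breaker's own move only lowers it, so it is below $1$ at every maker turn as well. By the first observation maker can never complete a line, so breaker wins (on a finite board the game ends with no maker line; the same monotonicity shows breaker survives indefinitely on an infinite board). The step I expect to demand the most care is the increase bound for maker's move: arguing precisely that each surviving line through $t$ merely doubles its weight — controlled by $D(t) \le D(s)$ — and that none of these surviving lines is a $1$-line (guaranteed exactly because the post-breaker potential is still below $1$), so the weight-doubling never corresponds to an actual maker win.
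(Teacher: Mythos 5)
Your proposal is correct and follows essentially the same argument as the paper: a greedy breaker who always takes the square of maximum danger (your $D(s)$ is exactly the paper's $cont(s)$) keeps the potential non-increasing over each breaker--maker round, and since a $1$-line alone contributes $1$, a potential perpetually below $1$ means maker can never complete a line. Your explicit remark that maker's increase is bounded by the \emph{post-breaker} danger of $t$, which is at most the pre-breaker value $D(t)\le D(s)$, is a slightly more careful rendering of a step the paper states only implicitly.
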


\begin{proof}
Consider all lines $n_1 \dots n_r$ containing some square $s$. Suppose their lengths are $l_1 \dots l_r$, respectively. The contribution of these lines to the total potential is $cont(s) = \sum_{i=i}^{r} 2^{-(l_i-1)}$. If maker moves to $s$, all lines turn shorter by one, doubling their contribution to $2\cdot cont(s)$. If, on the other hand breaker moves to $s$, the lines become dead, making their contribution $0$. In either case, the change in potential is the same ($cont(s)$), but with different sign. 

If breaker comes next on board $b$, he can always select the square $s$ with the largest corresponding potential contribution. Any square $s'$ that maker subsequently selects has at most the same contribution, i.e., $cont(s) \geq cont(s')$. This means that the potential of the resulting board $b'$ cannot increase: $pot(b') \leq pot(b) - cont(s) + cont(s') \leq pot(b)$. Breaker hence has a strategy that ensures that the potential is monotonic decreasing in every two steps. This, combined with the assumption that $pot(b) < 1$, entails that for any successor board $b'$ of $b$ $pot(b') < 1$. 
Assume, for contradiction that maker wins. This can only happen if his last move was into a 1-line. However, the potential contribution of that line is $1$, contradicting the assumption that breaker can always ensure that the potential is strictly less than $1$.
\end{proof}

\section{Reduction of the $(\infty,\infty,7)$--game to the Finite
  ${(4,n,7^{tr})}$--game}
\label{sec:reduction}

We aim to prove that the $(\infty,\infty,7)$--game is a draw by proving that breaker wins this game.
Finding a breaker strategy on an infinite board can be difficult, but in some cases breaker can partition its strategy into pieces. Such partition involves partitioning the board itself and dealing with each partition independently: when maker colors a node in one of the partitions, breaker answers in the same partition, regardless of the other partitions.
Beyond node partitioning, we also have to partition the edges of the hypergraph: for every hyperedge there should be a hyperedge in one of the partitions, which is a subset of the initial hyperedge. Formally: 

\begin{theorem}
\label{thm:reduction}
Let $H = (V, E)$ be a hypergraph and let $V_1, V_2, \dots$ denote a (possibly infinite) partitioning of its vertex set. Let $E_1, E_2, \dots$ denote edges defined on $V_1, V_2, \dots$, respectively, such that $\forall e \in E (\exists i (\exists e' \in E_i (e' \subseteq e)))$. If breaker wins in each $(V_i, E_i)$ then it wins $(V, E)$ as well.
\end{theorem}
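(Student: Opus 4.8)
The plan is to prove this by a \emph{divide-and-conquer} (direct-sum) strategy for breaker: breaker plays an independent copy of the game on each block $(V_i, E_i)$ and, whenever maker moves inside a block, breaker answers inside the \emph{same} block, following the winning strategy guaranteed there. Concretely, breaker maintains a virtual board for every $(V_i, E_i)$. When maker colors a vertex $v \in V_i$ in the real game, breaker records $v$ as maker's move on virtual board $i$, consults breaker's winning strategy $\sigma_i$ for $(V_i, E_i)$, obtains a response $w \in V_i$, and plays $w$ on the real board as well.

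First I would verify that this prescription is well defined, i.e. that $w$ is always a legal (empty) square. The key is the invariant that, for each $i$, the restriction of the real play to $V_i$ coincides exactly with a legal play of $(V_i, E_i)$ in which maker moves first and breaker follows $\sigma_i$: every maker stone in $V_i$ is mirrored on virtual board $i$, and every breaker stone in $V_i$ is precisely a $\sigma_i$-response. Since the blocks partition $V$ and breaker only ever answers inside the block maker just played in, this correspondence is preserved move by move; in particular the square $w$ returned by $\sigma_i$ is empty on virtual board $i$, hence empty on the real board. Because maker moves first on each virtual board and each maker move in a block is answered (in the very next half-move of the real game) before maker's next move in that block, the move counts on virtual board $i$ stay balanced after each of breaker's replies, so $\sigma_i$ is always consulted on its own turn.

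Next I would establish that the strategy is winning. Suppose, for contradiction, that maker completes some hyperedge $e \in E$, i.e. maker colors every vertex of $e$. By the covering hypothesis $\forall e \in E\, \exists i\, \exists e' \in E_i\, (e' \subseteq e)$, there is a block index $i$ and an edge $e' \in E_i$ with $e' \subseteq e$; since $e' \in E_i$ we have $e' \subseteq V_i$. All vertices of $e'$ lie in $e$ and are therefore maker-colored, and by the invariant above they are maker-colored on virtual board $i$. Hence maker completes $e'$ on virtual board $i$, contradicting the assumption that $\sigma_i$ wins $(V_i, E_i)$. Thus maker can never complete any edge of $E$, which is exactly breaker winning $(V, E)$.

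The step I expect to be the main obstacle is the cross-block bookkeeping of turns, specifically the possibility that maker fills the last empty square of some finite block $V_i$, so that $\sigma_i$ has no response while the global board is not yet full and breaker must still move. Under the above correspondence this can occur only when $|V_i|$ is odd; for even or infinite blocks breaker always has a reply, which already covers the relevant application to the tiling into $4 \times n$ blocks. In general it is resolved by the observation recorded earlier that a breaker stone only removes hyperedges and therefore shrinks the hypergraph: breaker may make an arbitrary legal move in any unfinished block, and such an unprompted stone can never help maker, so each $\sigma_i$ remains winning even in the presence of these extra breaker stones. With this caveat handled, the two invariants above yield a breaker win on $(V, E)$.
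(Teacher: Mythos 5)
Your proof follows essentially the same approach as the paper's: breaker mirrors maker's moves block by block, applying the local winning strategy $\sigma_i$ in whichever block $V_i$ maker just played, and the covering condition $e' \subseteq e$ transfers the blocked sub-edge to the original edge. Your additional bookkeeping (the virtual-board invariant and the handling of a block whose squares are exhausted, where breaker plays an arbitrary harmless move elsewhere) fills in details the paper's terser argument leaves implicit, but the underlying strategy is identical.
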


\begin{remark}
Note that in case breaker cannot win in some of the partitions, this does not imply that maker wins $(V,E)$.
\end{remark}

\begin{proof}
Consider an edge $e \in E$. We know that there is an edge $e'$ contained among the edges $E_k$ of some subgraph $(V_k, E_k)$ such that $e' \subseteq e$.  Each time maker moves, it colors a vertex that is contained in exactly one $V_i$. Breaker can respond by following his winning strategy in the same subgraph $(V_i, E_i)$. This ensures that in each subgraph $(V_i, E_i)$ breaker will eventually block all hyperedges. Hence breaker will eventually block $e'$ as well, which implies that it also blocks $e$. 
\end{proof}

We  partition the board into finite $(4,n)$ blocks with nodes $x_{(i,j)}$ $i\in [1\isep 4]$, $j \in [1\isep n]$, with the following hyperedges:
    
\noindent {\bf Horizontal edges}, for $i \in [1\isep 4]$
        \begin{align}
        &\{x_{(i,1)},x_{(i,2)},x_{(i,3)},x_{(i,4)}\} \label{horizontal-4} \\
        &\{x_{(i,n-3)},x_{(i,n-2)},x_{(i,n-1)},x_{(i,n)}\} \label{7-line-def}\\
        &\{x_{(i,j)},\dots,x_{(i,j+6)}\} & j \in [2 \isep n-7]
    \end{align}
    
\noindent {\bf Vertical edges}:
        \begin{align}
        &\{x_{(1,j)},x_{(2,j)},x_{(3,j)},x_{(4,j)}\} & j \in [1 \isep n] \label{vertical-4} 
        \end{align}
    
\noindent {\bf Diagonal edges}:
        \begin{align}
        &\{x_{(i+1,1)},x_{(i+2,2)},x_{(i+3,3)},x_{(i+4,4)}\} & i \in [0 \dots n-4] \label{cross-4-1}\\
        &\{x_{(i-1,1)},x_{(i-2,2)},x_{(i-3,3)},x_{(i-4,4)}\} & i \in [n+1 \dots 5] \label{cross-4-2}
        \end{align}
        
\noindent {\bf Extra edges}:
        \begin{align}
        &\{x_{(3,1)},x_{(2,2)},x_{(1,3)}\}, \{x_{(2,1)},x_{(3,2)},x_{(4,3)}\}, \notag \\
        &\{x_{(3,n-3)},x_{(2,n-2)},x_{(1,n-1)}\}, \notag \\ 
        &\{x_{(2,n-3)},x_{(3,n-2)},x_{(4,n-1)}\} \label{three-rules}\\
        & \{x_{(2,1)},x_{(1.2)}\}, \, \{x_{(n-2,1)},x_{(n-1,2)}\} \label{two-rules}
        \end{align}

For the visualization see Fig. \ref{fig:disproof_setup_and_lines}.
\begin{theorem}[]
\label{thm:reduction_4k}
Let us partition our infinite hypergraph $(V,E)$ into $(4,n)$ blocks, and define the above hyperedges on the blocks.
Then for every hyperedge $e\in E$ there exist a block $(V_{i,j},E_{i,j})$, which contains a hyperedge $f\in E_{i,j}$, which is the subset of $e$. 
\end{theorem}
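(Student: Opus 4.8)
The plan is to verify the stated covering property directly, by classifying the hyperedges of $E$ according to their orientation and exploiting the periodicity of the $(4,n)$ tiling. Recall that $E$ consists of all length-$7$ lines of $\mathbb{Z}\times\mathbb{Z}$ in four orientations: horizontal, vertical, and the two diagonal directions. The block partition is periodic --- period $4$ across the short side and period $n$ across the long side --- so the relative position of a given line $e$ with respect to the tiling is determined by finitely many residues. It therefore suffices to treat each orientation and, within it, each residue class of the starting coordinate modulo the relevant period; in every case I will exhibit a single block $(V_{i,j},E_{i,j})$ and a defined edge $f\in E_{i,j}$ with $f\subseteq e$.

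First I would dispose of the two straight orientations, which are the easy directions. For a vertical line, the $7$ consecutive rows it occupies always contain a full aligned block-row of $4$ rows (any window of $7$ consecutive integers contains a complete residue block of length $4$); since the single column of the line lies in one block, those $4$ cells are exactly a vertical edge \eqref{vertical-4}. For a horizontal line occupying columns $[c,c+6]$ in one row, there are two cases. If the window lies inside one block it is either a length-$7$ edge from the middle family, or, when it abuts an end, it contains one of the two length-$4$ edges \eqref{horizontal-4}, \eqref{7-line-def}. If the window straddles a vertical block boundary, then since the two parts have sizes summing to $7$ and $3+3<7$, at least one part retains $\ge 4$ cells and hence contains the corresponding end edge \eqref{horizontal-4} or \eqref{7-line-def}.

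The diagonals are the crux. By the same period-$4$ pigeonhole, a length-$7$ diagonal crosses all four rows of some complete block-row, and in those four rows it occupies four consecutive columns forming the pattern of a diagonal edge \eqref{cross-4-1} or \eqref{cross-4-2} (according to the slope). If these four columns fall inside a single block, that $4$-cell segment is precisely a defined diagonal edge and we are done. The remaining possibility is that the segment is clipped by a vertical block boundary, splitting into a top piece in one block and a bottom piece in the neighbouring block. This is exactly the configuration that the extra edges \eqref{three-rules} and \eqref{two-rules} are meant to absorb: I would enumerate the possible clip positions (one, two, or three cells on one side), for each diagonal slope and each row residue, and match the surviving corner run --- taken from whichever block, and if necessary complemented by the partial block-rows that the diagonal also enters --- against a length-$3$ corner edge from \eqref{three-rules} or a length-$2$ corner edge from \eqref{two-rules}.

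I expect the diagonal corner analysis to be the main obstacle. The horizontal and vertical cases reduce to a one-dimensional counting argument, but the diagonal clips couple the vertical residue of the line with the horizontal offset of the tiling, so the verification must check that \emph{every} way a length-$7$ diagonal can cross a block corner still leaves, in some block, a contiguous diagonal run matching one of the defined $4$-, $3$-, or $2$-edges. Once this finite but careful bookkeeping is complete, the covering property holds for all $e\in E$, which is precisely the hypothesis needed to invoke Theorem~\ref{thm:reduction}.
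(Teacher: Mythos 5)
Your proposal follows essentially the same route as the paper: split by orientation, use the pigeonhole fact that any $7$ consecutive rows contain a complete block-row of $4$ to reduce vertical and diagonal lines to a $4$-cell segment, handle horizontal lines by noting one side of a straddled boundary keeps at least $4$ cells, and absorb the clipped diagonal corner segments with the extra edges \eqref{three-rules} and \eqref{two-rules}. The one step you defer --- enumerating the clip positions --- is exactly the paper's short final case split: a $4$-cell diagonal segment cut by a block boundary splits either $2$--$2$, where the upper pair is a \eqref{two-rules} edge, or $3$--$1$, where the $3$-cell piece is a \eqref{three-rules} edge, so your plan closes as intended.
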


\begin{proof}
For any 7-line $l \in E$ one of the following holds:
\begin{enumerate}[label=\textbf{C.\arabic*}]
    \item All $v \in l$ are contained in a single block. \label{inner-7}
    \item $l$ crosses at least 2 blocks,  and it contains $4$ vertices in one of them. \label{side-4}
    \item $l$ crosses 3 blocks, and has at most 3 nodes in each block. That's only possible in the corners. \label{corner-case}
\end{enumerate}

\noindent In \ref{inner-7}, $l$ must be horizontal (because a block has 4 rows) and hence it is covered by (\ref{7-line-def}) obviously.

\noindent In \ref{side-4}, $l$ has at least 4 nodes in one block and is covered by (\ref{horizontal-4}), (\ref{vertical-4}), (\ref{cross-4-1}) or (\ref{cross-4-2}).

\noindent In \ref{corner-case}, consider the $4 \times \infty$ region, where $l$ has four nodes. If all these nodes are in one block, then (\ref{cross-4-1}) or (\ref{cross-4-2}) covers $l$. Otherwise, this 4-line crosses the horizontal separator between neighbouring blocks (see figure \ref{fig:corner_lines}). If the separator splits the nodes 2-2, the upper two is covered by (\ref{two-rules}). Otherwise, the split is 3-1 and the 3-subline is covered by (\ref{three-rules}).
\end{proof}

\begin{figure}
    \centering
    \includegraphics[scale=0.4]{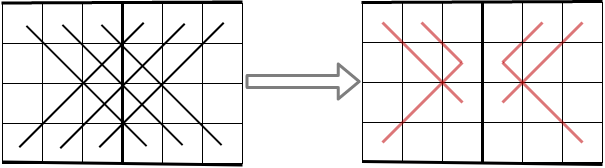}
    \caption{There are 6 4-lines which cross the horizontal border of two neighbouring blocks. We choose a subline, and we add it to the respective block. We choose the larger one by the 3-1 split, and we choose the upper ones by the 2-2 split. In the left figure you can see, how a 4-line can cross the horizontal border, and in the right the respective subparts, which we add to the respective blocks.}
    \label{fig:corner_lines}
    \vspace{-10pt}
\end{figure}

\begin{definition}
Let $(4,n,7^{tr})$--game denote the maker-breaker game on hypergraph $H=(V,E)$, $H=\{ v_{(i,j)} | \, i\in [1 \dots 4],\, j \in [1 \dots n] \}$ and (truncated) hyperedges defined above.
\end{definition}

\begin{conjecture}
\label{thm:csoka}
There is a breaker win strategy for the $(4,n,7^{tr})$--game if $n$ is large enough. 
\end{conjecture}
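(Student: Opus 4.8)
The plan is to prove breaker win by exhibiting an explicit \emph{pairing strategy} for the $(4,n,7^{tr})$--game that is valid for all sufficiently large $n$; by the pairing argument of Section~\ref{sec:background}, producing a set of pairwise disjoint vertex pairs that meets every hyperedge already suffices. Since the hypergraph is almost translation invariant in the $j$ direction -- the vertical $4$-lines (\ref{vertical-4}), the diagonal $4$-lines (\ref{cross-4-1})--(\ref{cross-4-2}) and the interior horizontal $7$-lines are all translates of a bounded pattern -- I would first search for a pairing of the bi-infinite strip $[1\isep 4]\times\mathbb{Z}$ that is periodic with some small period $p$ in $j$. Concretely, I would look for a $4\times p$ block of pairs that (i) contains two cells of every column, so each vertical $4$-line is blocked; (ii) contains two cells of the same row in every $7$-window, so each horizontal $7$-line is blocked; and (iii) contains two cells of every length-$4$ diagonal window in both diagonal directions. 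Condition (ii) is mild -- it forces only roughly one horizontal pair per six cells in a row -- so there is ample freedom; the delicate point is meeting (i)--(iii) \emph{simultaneously} with a single consistent set of pairs.

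Next I would handle the two ends of the finite board, which carry the short edges that have no interior analogue: the boundary horizontal $4$-lines (\ref{horizontal-4})--(\ref{7-line-def}), the corner $3$-lines (\ref{three-rules}) and the $2$-lines (\ref{two-rules}). A $2$-line is contained in a pair only if it \emph{is} a pair, so those pairs are forced; a $3$-line forces a pair among its three vertices. Because all of these edges live in a window of bounded width at each end (independent of $n$), I would treat each end as a fixed finite gadget and verify, by a finite pairing search (or directly by a PNS solve of the gadget), that the forced boundary pairs can be completed to a pairing of the gadget that is compatible with the periodic interior pattern along the seam where they meet. I would then stitch the pieces together: for $n$ exceeding twice the gadget width plus one period, the left gadget, an integer number of interior periods, and the right gadget can be concatenated so that every edge of $E$ lies entirely in one region and contains a pair of that region, yielding a global pairing and hence a breaker win. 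The threshold $n_0$ of the conjecture would then arise precisely as the smallest $n$ for which the two boundary gadgets no longer overlap and interfere.

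The main obstacle is Step~1: it is not clear that a single periodic pairing can meet the vertical, horizontal and both diagonal constraints at once. Indeed, the absence of any known pairing strategy for the full $(\infty,\infty,7)$--game is exactly what makes $k=7$ hard, and it is only the truncation to four rows -- which turns all vertical and diagonal threats into mere $4$-lines -- that makes success plausible here; it is by no means guaranteed. Should no full pairing exist, I would fall back to a hybrid argument: use a partial pairing together with vertex domination to delete all the long edges, reducing $(V,E)$ to a bounded-size residual hypergraph concentrated near the two boundaries, and then prove breaker wins that residual directly -- either by checking $pot<1$ and invoking Theorem~\ref{thm:breaker_win}, or by an exhaustive PNS solve of the fixed finite end-gadget. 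This hybrid route is the more likely of the two to go through, and it still explains the maker/breaker flip, since the residual only becomes breaker-favourable once the board is long enough to separate the two ends.
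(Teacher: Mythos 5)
First, be aware that the paper does not prove this statement: it is stated as a conjecture, supported only by a further conjecture about a half-infinite $(4,\mathbb{N},*)$--game and by empirical evidence from the search experiments. In fact the paper establishes the opposite value for every board length it can reach ($(4,n,7^{tr})$ is \emph{maker} win for all $n\leq 14$), so any proof would have to be genuinely $n$-dependent and cannot come from a construction that would apply just as well to moderate $n$.

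Your main route is blocked by a counting obstruction, not merely ``not guaranteed to work.'' A pairing strategy must place a disjoint pair inside every hyperedge. The $n$ vertical $4$-lines \eqref{vertical-4} are pairwise disjoint, and a pair contained in one of them occupies two cells of a single column; the $n-3$ diagonal $4$-lines \eqref{cross-4-1} are pairwise disjoint and meet each column in at most one cell, and likewise the $n-3$ anti-diagonals \eqref{cross-4-2}; moreover no two distinct cells can lie simultaneously in a common column and on a common diagonal, nor on a common diagonal of each orientation. Hence the pairs required by these three families are all distinct and pairwise disjoint, consuming at least $2n+2(n-3)+2(n-3)=6n-12$ distinct cells, while the strip has only $4n$ cells --- a contradiction for every $n\geq 7$. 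So no pairing strategy exists for the four-row strip, periodic or otherwise, and the same count defeats your fallback: any partial pairing leaves unboundedly many vertical and diagonal $4$-lines uncovered, so the residual hypergraph is not concentrated near the two ends and neither Theorem~\ref{thm:breaker_win} nor a finite gadget solve can finish the argument. Any viable attack on this conjecture must use a genuinely adaptive breaker strategy; a static pairing argument cannot succeed on a four-row strip against $4$-in-a-row threats in three directions, which is consistent with the paper leaving the statement open and supporting it only with the trends reported in Table~\ref{tab:balance}.
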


It is an easy consequence of Conjecture~\ref{thm:csoka} and Theorems \ref{thm:reduction} and \ref{thm:reduction_4k} that the $(\infty,\infty,7)$--game is a draw. Hence, we have reduced our proof task from the infinite $(\infty,\infty,7)$--game to finding some $n$ for which the finite $(4,n,7^{tr})$--game is breaker win. In the following we present our results towards proving the latter conjecture.

We support Conjecture~\ref{thm:csoka} by the following. We define the table $(4, \mathbb{N}, *)$ as follows. The table is $[1..4] \times [1..\infty)$, and we add the 7 extra edges (including \eqref{horizontal-4}) in the only end.

\begin{conjecture}
\label{thm:csoka2}
There is a breaker win strategy for the $(4, \mathbb{N}, *)$--game. 
\end{conjecture}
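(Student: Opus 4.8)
The natural first move is to look for a pairing strategy, the device this paper uses elsewhere to certify breaker wins, so the plan begins by asking whether the $(4,\mathbb{N},*)$--game admits one. I would argue that it does \emph{not}, and that this is the structural reason the statement is only a conjecture. A pair blocks a hyperedge only when \emph{both} of its cells lie on that edge, so every useful pair is horizontal, vertical, or diagonal and can block edges of its own orientation only. Restricting to the first $N$ columns ($4N$ cells), blocking the $N$ vertical $4$-lines needs $N$ pairwise disjoint vertical pairs ($2N$ cells); the $\approx N$ down-diagonals are mutually cell-disjoint and so need a further $\approx 2N$ cells, and the up-diagonals another $\approx 2N$. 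Since pairs of different orientation never share a cell, this already demands at least $6N - O(1)$ cells, which exceeds the $4N$ available. Hence no pairing exists and a genuinely adaptive strategy is unavoidable.

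Given that, the plan is to construct an adaptive breaker strategy guided by the \emph{potential} used in Theorem~\ref{thm:breaker_win}, maintaining the threat-theoretic invariant that after every breaker move there is no $1$-line and no single maker reply can create two $1$-lines with distinct empty cells (a double threat). Breaker's rule would be two-tiered: if maker's last move created a $1$-line, take its unique empty cell; otherwise take a cell of maximal potential contribution near maker's move, exactly as in the proof of Theorem~\ref{thm:breaker_win}, so that the \emph{local} potential around the active region stays below $1$. I would then exploit the half-infinite geometry: the edge set is translation invariant except near the single left boundary, so in the bulk one fixed translation-invariant response pattern suffices, while the finite ``hot zone'' at the left end -- carrying the short horizontal $4$-line~\eqref{horizontal-4} and the extra edges~\eqref{three-rules},~\eqref{two-rules} -- is handled by an explicit, possibly PNS-verified, finite case analysis. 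Meshing the two regimes, i.e. checking that the bulk pattern and the boundary analysis agree on their overlap so that maker cannot build a threat straddling the seam, is a secondary task in the same spirit as the case split~(\ref{inner-7})--(\ref{corner-case}) in the proof of Theorem~\ref{thm:reduction_4k}.

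The hard part, and the reason a full proof is genuinely difficult, is the bulk. Because $pot(b)=\infty$ on the infinite strip, the clean global implication of Theorem~\ref{thm:breaker_win} ($pot<1 \Rightarrow$ breaker wins) cannot be invoked, and a weighted or summable potential is of no help since maker wins by completing \emph{any} line, however lightly weighted. One therefore needs a localized charging argument showing that breaker's response keeps every bounded window below the critical threshold and, crucially, that maker cannot assemble a double threat out of the densely packed short $4$-lines over arbitrarily many moves spread across the unbounded board. Proving that this invariant is maintainable for all time, with no pairing to fall back on, is the central obstacle; I expect it to require either a monovariant tailored to the $4$-row strip or a computer-assisted finite strategy template that provably tiles rightward from the boundary.
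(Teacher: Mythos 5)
The statement you were asked to prove is labelled a \emph{conjecture} in the paper, and the paper in fact offers no proof of it: the authors support it only with heuristic reasoning about planar game boards and with the computational evidence that maker cannot win when restricted to the first 13 columns. Your submission is consistent with that status --- you explicitly present a plan rather than a proof, and you yourself flag the unproved step. So the honest verdict is that there is a genuine gap, but it is the same gap the paper leaves open, and you have located it accurately.

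Concretely: your opening observation is correct and worth stating --- a pair can only block hyperedges collinear with it, and since the $N$ vertical $4$-lines, the $\approx N$ pairwise disjoint down-diagonal $4$-lines, and the $\approx N$ up-diagonal $4$-lines would together require about $6N$ cells out of only $4N$, no pairing strategy (global or eventually periodic) can exist for the $(4,\mathbb{N},*)$--game. This rules out the one ``off the shelf'' certificate the paper uses elsewhere and explains why the statement remains a conjecture. Your proposed two-regime strategy (finite case analysis near the left boundary carrying the extra edges \eqref{horizontal-4}, \eqref{three-rules}, \eqref{two-rules}, plus a translation-invariant response pattern in the bulk) is a plausible architecture, and your diagnosis of the obstruction is exactly right: Theorem~\ref{thm:breaker_win} cannot be invoked because $pot(b)=\infty$ on the strip, and no localized or windowed version of the potential argument is established anywhere --- not in your writeup and not in the paper. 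Until someone exhibits a bulk invariant (or a finite, verifiable strategy template that provably tiles rightward and meshes with the boundary analysis), the conjecture stands unproved; your proposal should be read as a research program aligned with, and somewhat more structured than, the paper's own informal evidence, not as a proof.
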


This conjecture is based on the following experience for planar game boards (In general, it should be true for hypergraphs with subexponential growth).
If Maker can win, then he can win in a small number of moves and with a very limited-size proof tree.
Also, if we see that breaker can choose between too many options which do not lead to a limited-size win for maker, that the state is a breaker win. 
All these are even more true if the game board has a small area in which the maker has better opportunities. Like in $(4, N, 7^{tr})$.
Here, we experienced that maker has a strong attack in the first few columns, but if the breaker finds the proper defense in the first few moves, then every possible attack ends up in a state where maker seems to have much worse attacking power compared to the starting state, and from which every reasonable greedy breaker strategy seems to be enough for preventing maker from winning.
We were able to formally prove that there is no maker win if maker can use the first 13 coloums.
When we considered tiny modifications of the board, if maker could win, then he was able to win using only the first 6 coloumns.



\section{Proof Number Search for Solving the $(4,n,7^{tr})$--game}
\label{sec:pns}

As we have seen in Section~\ref{sec:reduction}, any breaker win strategy for the $(4,n,7^{tr})$--game for some $n$ can be extended to a breaker win strategy in the $(\infty,\infty,7)$--game. The board of the $(4,n,7^{tr})$--game is finite (with $4 \cdot n$ moves), and so are the set of possible move sequences, hence it may be possible to find its game theoretic value using search algorithms. Nevertheless, naive and exhaustive search remains infeasible even for small $n$ values: there are roughly $(4 \cdot n)!$ valid games.\footnote{The real number of games is somewhat less since as soon as maker wins, we can terminate the game.} 

In the following, we instrument PNS to solve the $(4,n,7^{tr})$--game. One main strength of PNS is that it makes no game specific assumptions and can be used for any finite game. It can be seen as a refinement of Alpha/Beta~\cite{Bundy1984} pruning, in that in each extension step it takes a global look at the whole search tree and selects the branch that requires the least number of nodes to prove in order to prove the root node. This can result in the pruning of large parts of the search tree. Nevertheless, as we shall see in Section~\ref{sec:experiments}, the search space remains prohibitively large for plain PNS. Hence, we introduce several methods that reduce computation.
Some of our methods are well known or refinements of earlier ideas. Those that are novel to the best of our knowledge are the following: 1) isomorphy detection, 2) breaking the board into components and 3) heuristic \pn/\dn initialization. 

\subsection{Early recognition of winning states}
The search space grows exponentially with the number of valid steps, hence it is crucial to realise once the game has been decided and no further search is necessary. Theorem~\ref{thm:breaker_win} yields a sufficient condition for terminating the game, which we refer to as {\bf Breaker win stop}.
There is a more trivial {\bf Maker win stop} condition: the crossing of two 2-lines.

\begin{theorem} 
\label{maker_win_condition}
Supposing an optimal breaker, maker can win if and only if he moves to the crossing of two 2-lines.
\end{theorem}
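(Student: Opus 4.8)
The plan is to recast the statement as a characterization of \emph{double threats}. Call a square $s$ a crossing of two 2-lines if $s$ is empty and lies in two distinct 2-lines $A,B$; write their empty squares as $\{s,a\}$ and $\{s,b\}$. I would treat the generic case $a \ne b$ as the intended meaning of ``crossing'', and dispose separately of the degenerate case $a=b$ (two collinear 2-lines sharing both of their gaps), where playing $s$ leaves only a single threat and hence does not constitute a genuine fork. The whole theorem then reduces to the slogan: against an optimal breaker, maker wins exactly when he can manufacture, in a single move, two 1-lines with distinct empty squares.

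For the ``if'' direction I would argue directly. Suppose it is maker's turn and maker plays the crossing $s$. Each of $A,B$ loses exactly one empty square, so both become 1-lines, now with distinct empty squares $a$ and $b$. It is breaker's turn, and a breaker move fills a single square, so breaker can block at most one of $a,b$; maker then completes the surviving 1-line on the following move and wins. This shows that moving to a crossing of two 2-lines forces a win even against an optimal breaker.

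For the ``only if'' direction I would trace the winning play backwards. Maker's decisive move completes some hyperedge $L$, so immediately before it $L$ was a 1-line with a single empty square $s^{*}$ and it was maker's turn; in particular $L$ was already a 1-line on breaker's preceding turn. An optimal breaker that can avoid immediate loss blocks a lone threat, so since breaker did not play $s^{*}$ he must have been facing a second 1-line $L'$ with empty square $s' \ne s^{*}$, i.e. a double threat. I would then consider the \emph{first} moment in the play at which a double threat (two 1-lines with distinct empty squares) appears. Because a maker move at a square $s$ only decrements the empty-count of the lines through $s$, and because breaker's optimality rules out any surviving single 1-line before this first moment, both 1-lines of this first double threat must have been created simultaneously by the same maker move; hence that move lies in two distinct 2-lines, i.e. it is precisely a move to the crossing of two 2-lines.

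The hard part will be the bookkeeping in the backward direction: making precise that before the \emph{first} double threat every single threat was actually neutralized by the optimal breaker, so that no ``dormant'' 1-line can combine with a freshly created one to form the fork through a single 2-line. This is handled by the first-appearance (minimality) choice together with the observation that a pre-existing live 1-line on maker's turn would have been an unblocked lone threat on breaker's immediately preceding turn, contradicting either optimality or first-appearance. A secondary obstacle is pinning down the reading of ``crossing'' so that the degenerate case $a=b$ is excluded, since there maker's move yields only one threat and no win follows; I expect this to require a short geometric remark that distinct 2-lines giving a genuine fork meet in exactly one empty square.
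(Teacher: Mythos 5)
Your proof is correct and follows essentially the same route as the paper's: the ``if'' direction is the immediate double threat, and your ``only if'' argument (the first double threat must be created by a single maker move lying in two distinct 2-lines, since an optimal breaker kills every lone 1-line) is the backward-traced contrapositive of the paper's forward contradiction argument, in which breaker simply answers each unique new 1-line. Your extra attention to the degenerate case where the two 2-lines share both empty squares---so the ``fork'' collapses to a single blockable threat---is a small refinement that the paper's one-line ``if'' direction silently skips, but it does not change the substance of the argument.
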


\begin{proof}
The if case is trivial since moving into the crossing of two 2-lines yields two 1-lines. To prove the only if part, assume for contradiction that maker wins without ever moving into the crossing of two 2-lines. Each time maker moves into a 2-line, the optimal breaker strategy responds by moving into the resulting 1-line. Since it never happens that more than one 1-line is created, breaker can always break all 1-lines, so maker cannot win. This contradiction proves the theorem.
\end{proof}

\subsection{Eliminating branches from the search space}
We identify situations where we can safely restrict the valid moves. Proving that these restrictions do not affect the game theoretic value of the board is 
left to the reader.

\begin{itemize}
    \item {\bf Forced move}: if a board contains a 1-line or the crossing of two 2-lines (Theorem~\ref{maker_win_condition}), we are allowed to disregard all other moves. 
    \item {\bf Dead square elim}: if an empty square is not contained in any lines, then we can eliminate that square, since neither player benefits from moving there.
    \item {\bf Dominated square}: Suppose square $s$ is contained in a single line $l$. If there is another square $s'$ that is only contained in $l$, then they form a partial pairing and $s, s', l$ can be eliminated. If $l$ is a 2-line, then its other square $s'$ dominates $s$ and we can always assume that maker eagerly moves to $s'$, forcing breaker to move to $s$.
\end{itemize}

\subsection{Avoiding repeated searches}
The same set of moves, played in different orders result in identical boards. We can save a lot of computation and memory by maintaining a transposition that maps boards to search nodes, as we can use the same search node for identical boards. This turns the search tree into a search DAG. Furthermore, we also exploit the horizontal symmetry of the game, i.e., collapse states that are symmetrical. 
We introduce an even more refined transposition table which exploits the isomorphy of boards (considered as hypergraphs). We transform each graph into a canonical form and store it in the transposition table. We refer to this extension as {\bf Isomorphy}.

\subsection{Partitioning the board}

Consider a hypergraph $(V, E)$ with marks of maker and breaker at $V_M, V_B \subset V$, respectively (with $V_M \cap V_B = \emptyset$). We define the {\bf residual hypergraph} $(V', E')$ as follows. $V' = V \setminus V_M \setminus V_B$ and $E' = \big\{ e \cap V' \ |\ e \in E,\ e \cap V_B = \emptyset \big\}$. When we continue playing in $(V, E, V_M, V_B)$, then it is equivalent to starting a new game in $(V', E')$.

Theorem~\ref{thm:reduction} implies that if the residual graph is not connected, then we only need to find the game-theoretic values of the components. We can use a similar tool if the hypergraph is not 2-connected.  
\begin{theorem}
\label{thm:2-connected}
Let $H = (V, E)$ a (possibly infinite) hypergraph with subhypergraphs $(V_1, E_1)$ and $(V_2, E_2)$ satisfying $V_1 \cup V_2 = V$, $V_1 \cap V_2 = \{v\}$ and $E = E_1 \cup E_2$. If maker starts the game, then he can win in $(V, E)$ if and only if one of the following holds.
\begin{enumerate}
    \item Maker can win $(V_1, E_1)$.
    \item Maker can win $(V_2, E_2)$.
    \item Maker can win both games $(V_1, E_1)$ and $(V_2, E_2)$ with the extra advantage that $v$ is colored with maker's color and maker can still make the next move.
\end{enumerate}
\end{theorem}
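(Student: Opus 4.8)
The plan is to prove both implications by exhibiting explicit strategies, leaning throughout on the \emph{monotonicity} of maker-breaker games: giving maker an extra stone, or letting breaker skip (pass) a move, can never turn a maker win into a breaker win, and dually an extra breaker stone never hurts breaker. Write $A_i$ for ``maker, to move first, wins $(V_i,E_i)$'' and $B_i$ for the single-component version of item~(3), namely ``maker wins $(V_i,E_i)$ from the position in which $v$ already carries maker's colour and maker is to move''. Monotonicity immediately gives $A_i\Rightarrow B_i$ (the advantage position has one extra maker stone and the same tempo). The three items then read $A_1$, $A_2$, and $B_1\wedge B_2$, so the claim is that maker wins $(V,E)$ iff $A_1\vee A_2\vee(B_1\wedge B_2)$.

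For the ``if'' direction, items~(1)/(2) are easy: to use $A_1$, maker plays his $(V_1,E_1)$ strategy and treats every breaker move outside $V_1$ as a pass (legitimate by monotonicity), while a breaker move at the shared vertex $v\in V_1$ is simply fed to the strategy as an ordinary $V_1$ move; maker thus completes an edge of $E_1\subseteq E$. For item~(3) the key move is to play $v$ first, colouring it for maker in \emph{both} components. On his next turn maker commits to whichever component breaker did \emph{not} touch with breaker's first reply (breaker cannot reply at $v$, as it is taken, so exactly one component is left untouched). In that component $v$ is maker's and maker moves before breaker ever plays there, so the local play is precisely the advantage game $B_i$; maker runs his $B_i$-winning strategy, treating all of breaker's moves in the other component as passes, and wins.

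For the ``only if'' direction I would argue the contrapositive. Suppose none of the items holds; using $A_i\Rightarrow B_i$ this yields (after relabelling) $\neg A_2$ and $\neg B_1$, i.e. breaker has a winning strategy $\tau_2$ in $(V_2,E_2)$ with maker to move first, and a winning strategy $\tau_1^{\mathrm{adv}}$ in $(V_1,E_1)$ even when $v$ starts as maker's. Breaker now plays a \emph{component-local reply} strategy: he pretends $v$ is maker-coloured in component~$1$ from the outset (a phantom stone that by monotonicity only helps maker, so $\tau_1^{\mathrm{adv}}$ remains valid), answers any maker move inside a component by the corresponding strategy in that component, and crucially treats a maker move at $v$ as the already-accounted-for free advantage move in component~$1$ (needing no reply there) and as an ordinary maker move in component~$2$ (answered by $\tau_2$). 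Because $v$ is the \emph{only} shared vertex, every maker move thus obliges breaker to reply in exactly one component, so both local games stay in sync and breaker blocks every edge of $E_1\cup E_2=E$. If $\tau_2$ ever wants breaker to occupy $v$, doing so only kills further $E_1$-edges, so breaker still wins component~$1$ a fortiori.

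The main obstacle is precisely the shared vertex $v$ together with tempo bookkeeping: a move at $v$ advances both subgames simultaneously, so neither player can faithfully mirror a one-component strategy without deciding where the ``double'' effect of $v$ is absorbed. This is what forces the asymmetric shape of item~(3)---maker needs the advantage in \emph{both} components because breaker can contest only one per turn, while breaker needs only one advantage-level defence because maker can cash his single extra tempo (the $v$ move) in only one component. Making this rigorous amounts to checking that the local turn-parity after the $v$ move matches the parity assumed by each subgame strategy, which the pass/phantom-stone monotonicity handles. Finiteness is not needed: maker wins by completing a single (finite) edge and breaker wins by blocking each edge individually, so the same strategies apply verbatim when $(V,E)$ is infinite.
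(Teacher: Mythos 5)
Your proof is correct and follows essentially the same route as the paper's: the same ``play $v$ first and commit to the component breaker leaves untouched'' argument for item~(3), and the same contrapositive for the converse, with breaker running a normal defence in one component and the advantage-position defence (phantom maker stone at $v$) in the other. You spell out the pass/monotonicity and tempo bookkeeping that the paper leaves implicit, but the underlying strategy decomposition is identical.
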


\begin{proof}
If 1) or 2) holds, then maker wins $(V, E)$ by winning $(V_1, E_1)$ or $(V_2, E_2)$. If 3) holds then maker chooses $v$ and then wins the game $(V_i, E_i)$ in which breaker does not respond in his next move.

If none of the three conditions hold, then by symmetry, we can assume that breaker can win $(V_1, E_1)$ (played normally) and breaker wins $(V_2, E_2)$ even if maker has the extra move at $v$. Breaker can follow these strategies as in Theorem~\ref{thm:reduction}. (In $(V_2, E_2)$, breaker assumes having maker's mark at $v$.)
\end{proof}

\begin{theorem}
\label{thm:2-edge-connected}
Let $H = (V, E)$ be a (possibly infinite) hypergraph with subhypergraphs $(V_1, E_1)$ and $(V_2, E_2)$ satisfying $V_1 \cup V_2 = V$, $V_1 \cap V_2 = \emptyset$ and $E = E_1^{\prime} \cup E_2^{\prime} \cup \{e\}$ and $E_i = E_i^{\prime} \cup \{e \cap V_i\}$. If maker starts the game, then he can win in $(V, E)$ if and only if one of the following holds.
\begin{enumerate}
    \item Maker can win $(V_1, E_1^{\prime})$.
    \item Maker can win $(V_2, E_2^{\prime})$.
    \item Maker can win both games $(V_1, E_1)$ and $(V_2, E_2)$.
\end{enumerate}
\end{theorem}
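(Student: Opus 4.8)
The plan is to prove the stated equivalence by mirroring the proof of Theorem~\ref{thm:2-connected}, handling the two directions separately and reducing the simultaneous play of the two parts to the strategy-composition argument already used in Theorem~\ref{thm:reduction}. Throughout I would use that $V_1,V_2$ are vertex-disjoint, that $e$ is the only edge meeting both parts, and the asymmetry that lies at the heart of the statement: completing $e$ in $(V,E)$ requires coloring all of $e\cap V_1$ \emph{and} all of $e\cap V_2$, whereas in the auxiliary games $(V_i,E_i)$ the half $e\cap V_i$ is itself a completable edge.

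First I would show that each of conditions (1)--(3) is sufficient. Conditions (1) and (2) are immediate: if maker wins $(V_1,E_1')$ he plays that first-player strategy entirely inside $V_1$, treating every breaker move in $V_2$ as a pass; since the edges of $E_1'\subseteq E$ live on $V_1$, breaker's moves in $V_2$ cannot affect them and only waste breaker's tempo, so maker still completes an edge of $E_1'$ and wins $(V,E)$ (symmetrically for (2)). For condition (3) I would have maker run his first-player strategies $\sigma_1$ for $(V_1,E_1)$ and $\sigma_2$ for $(V_2,E_2)$ in parallel. The intended outcome is that either some $\sigma_i$ completes a genuine edge of $E_i'$, in which case maker has already won $(V,E)$, or else both strategies terminate by completing their halves $e\cap V_1$ and $e\cap V_2$, which jointly color all of $e$ and thus win $(V,E)$ through the bridge edge. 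The point that needs justification (discussed below) is that maker can keep both simulations faithful to first-player play.

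For the converse I would argue by contraposition, exactly as in Theorem~\ref{thm:2-connected}. If (3) fails then breaker wins at least one of $(V_1,E_1),(V_2,E_2)$; by symmetry say breaker wins $(V_1,E_1)$, and since (2) also fails breaker wins $(V_2,E_2')$. Breaker then plays the combined strategy of Theorem~\ref{thm:reduction}: because the parts are vertex-disjoint and maker moves first, breaker can always answer in the part where maker just moved, blocking $(V_1,E_1)$ on $V_1$ and $(V_2,E_2')$ on $V_2$. This blocks every edge of $E_1'$ and $E_2'$ and blocks $e\cap V_1$; since $e\supseteq e\cap V_1$, a breaker stone in $e\cap V_1$ already kills $e$. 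Hence breaker blocks all of $E=E_1'\cup E_2'\cup\{e\}$, so maker cannot win $(V,E)$. The symmetric case (breaker wins $(V_2,E_2)$, and (1) failing gives breaker $(V_1,E_1')$) is identical.

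I expect the entire difficulty to sit in the sufficiency of condition (3), specifically in the \emph{tempo} of the parallel simulation. Each $\sigma_i$ is a \emph{first-player} strategy, but in the combined game maker opens only one component with his first move; if breaker then moves first in the other component, maker is pushed into the responding (second-player) role there, which $\sigma_i$ does not directly cover, and with vertex-disjoint parts there is no shared vertex to restore the initiative the way the ``extra advantage at $v$'' clause does in Theorem~\ref{thm:2-connected}. The crux is therefore to certify that maker can still secure both halves $e\cap V_1,e\cap V_2$ regardless of which component breaker contests first. I would attack this by a move-counting invariant — tracking, component by component, that maker never falls behind the count $\sigma_i$ assumes — and by letting maker commit his opening move to the more urgent component. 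I regard closing this bookkeeping as the main obstacle: it is the step most likely to require an auxiliary tempo-preservation lemma, and it is where I would concentrate the effort before attempting the full write-up.
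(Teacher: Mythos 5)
Your treatment of conditions (1) and (2) and of the converse direction matches the paper's proof: the paper likewise disposes of sufficiency by "following the winning strategy or strategies as in Theorem~\ref{thm:reduction}," and proves the contrapositive by observing that if (3) fails breaker wins, say, $(V_1,E_1)$, and if (2) fails breaker wins $(V_2,E_2')$, whence the responder-composition of Theorem~\ref{thm:reduction} blocks $E_1'\cup E_2'$ and blocks $e$ through $e\cap V_1$. That half of your write-up is sound and needs no further work.

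The gap is exactly where you located it, and you should know that it is not a bookkeeping issue you can close with a tempo-preservation lemma: sufficiency of (3) \emph{as stated} is false. Take $V_1=\{a\}$, $V_2=\{b\}$, $E_1'=E_2'=\emptyset$, $e=\{a,b\}$. Then $E_1=\{\{a\}\}$ and $E_2=\{\{b\}\}$, so maker (moving first) wins both $(V_1,E_1)$ and $(V_2,E_2)$ by a single move, and condition (3) holds; yet in $(V,E)=(\{a,b\},\{\{a,b\}\})$ breaker simply answers maker's first move by taking the other vertex, and maker loses. The obstruction is precisely the one you describe: Theorem~\ref{thm:reduction} composes \emph{responder} (breaker) strategies, where skipped turns only help, whereas here maker must act as first player in two vertex-disjoint components simultaneously and can only open one of them; unlike in Theorem~\ref{thm:2-connected}, there is no shared vertex whose "extra advantage" clause absorbs the lost tempo. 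The paper's one-line appeal to Theorem~\ref{thm:reduction} glosses over this entirely. To make the statement provable, condition (3) must be strengthened --- e.g., to require that maker wins $(V_i,E_i)$ for both $i$ with at least one of them won even when breaker moves first in that component --- after which your parallel-simulation argument (answer in the component breaker just contested, open with the component that needs the initiative) goes through. So rather than "concentrating effort" on the invariant, you should first repair the hypothesis; as written, no amount of move-counting will rescue the forward direction.
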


\begin{proof}
If one of them holds, then maker can win $(V, E)$ by following the winning strategy or strategies as in Theorem~\ref{thm:reduction}.

If none of them holds, then by symmetry, we can assume that breaker can win both $(V_1, E_1)$ and $(V_2, E_2^{\prime})$, and hereby he can win $(V, E)$.
\end{proof}

A linear time algorithm can be used to detect if the hypergraph is not 2-connected. In such cases, the board can be reduced to 4 smaller boards, according to Theorems \ref{thm:2-connected} and \ref{thm:2-edge-connected} that can be evaluated independently. We refer to this optimisation as {\bf Components}.


\subsection{Replacing \pn/\dn values with game specific heuristics} \label{Heuristic_PN_DN}

PNS maintains \pn/\dn values for each search node, tracking the number of leaves that need to be proven or disproven to solve the given node. These values determine the next leaf to expand.
However, by setting leaf \pn/\dn values to $1$, this technique disregards the fact that two leaf nodes can be hugely different, due mostly to two factors: 1) the winner might be much more apparent in one situation than in another and 2) boards with many colored squares are easier to evaluate as they are closer to the end of the game. In the following, we explore the benefit of replacing \pn/\dn values with heuristic board evaluation functions in the leaf nodes~\footnote{\cite{Allis1994SearchingFS,mcts_saito,PNS_variants} also explore alternative initialization techniques.}.

%


\begin{figure}%
    \centering
    {{
    \includegraphics[width = 0.21\textwidth]{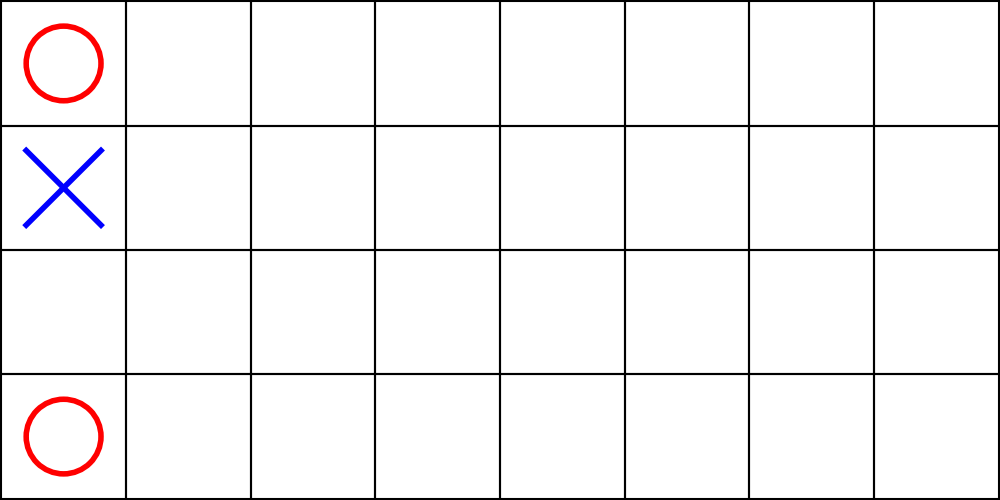}
    }}
    \qquad
    {{
        \includegraphics[width=0.21\textwidth]{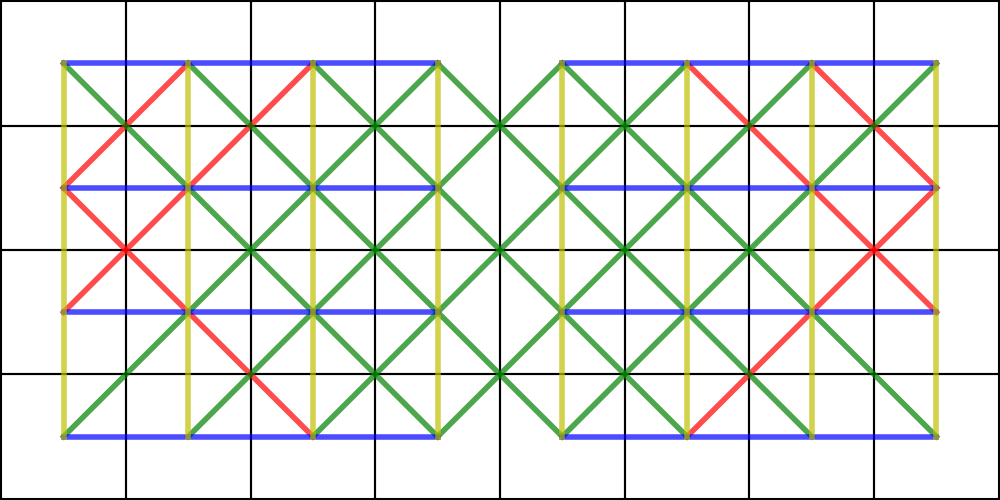}
    }}
    \caption{(Left) Disproof setup: 2 weak attacker moves, and one strong defender move (Right) The visualization of the hyperedges shorter than 7 that are added to each partition defined in Section~\ref{sec:reduction}.}%
    \label{fig:disproof_setup_and_lines}%
    \vspace{-10pt}
\end{figure}

\begin{figure}
    \vspace{-20pt}
    \centering
    \includegraphics[width=0.4\textwidth]{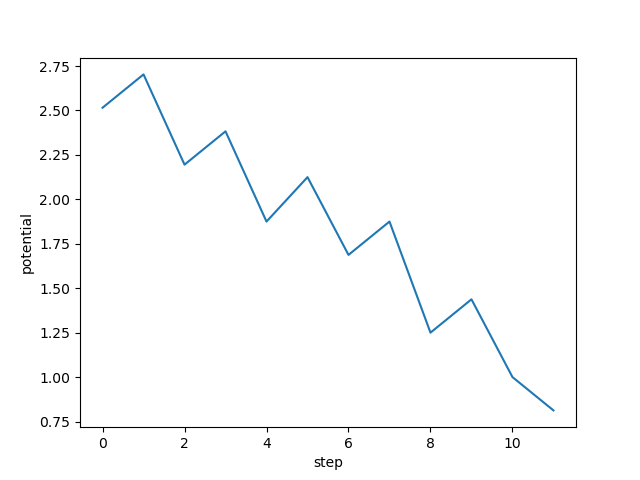}
    \caption{Stepwise potential in an illustrative game. Maker moves increase, while breaker moves decrease potential. In most reasonably "close" games, the average potential trends downwards.}
    \label{fig:potential}
    \vspace{-10pt}
\end{figure}

We know that once the potential of the board goes below a threshold ($1$ before breaker move), breaker has won the game. Manual inspection of game plays reveals that strong breaker moves are often those that greatly decrease potential. Figure~\ref{fig:potential} shows how potential changes in a typical close game. Potential values are monotonic decreasing if we consider OR and AND nodes separately. We search for a heuristic \dn function in the form of $\alpha^{pot(b)}$ and select $\alpha = 1000$ based on a grid search on the values $\{2, 10, 20, 100, 200, 1000, 2000\}$. Note however, that it makes little sense to compare potential values across OR and AND nodes, as OR node values are systematically lower, since the last move was by breaker. Hence, for OR nodes we consider the potential of the parent (an AND node), which is inherited by the child with the smallest potential. All other children are adjusted with the potential difference relative to the smallest child. Given board $b$ with parent $p$ and sibling $s$ such that $s$ has the smallest potential among the children of $p$, our {\bf heuristic $\dn$} value is

$$
DN(b) =
\left\{
	\begin{array}{ll}
		1000^{pot(b)} & \mbox{b is an AND node}\\
		1000^{pot(p) - pot(s) + pot(b)}  & \mbox{b is an OR node}
	\end{array}
\right.
$$

The applicability of the potential function is less straightforward in replacing the \pn function. This is because maker typically wins well before all squares are colored, and it might have many short wins that is not captured in aggregate line information. Original \pn is good at capturing short wins as such branches will have less leaves, i.e., lower \pn values. Hence, instead of replacing \pn values, we are looking to adjust them with game specific knowledge. We do this by accumulating search data and fit a model to it. 

We run PNS on $4 \times n$ boards for $n \in \{7, 8, 9, 10\}$ and collect states whose proof theoretic values have been proven. To obtain a balanced training set, we use two setups. The \emph{proof} setup starts from the empty board $b$ which is maker win for these $n$ values. The \emph{disproof} setup starts from a board $b'$ which contains two weak maker moves and one strong breaker move, shown in Figure~\ref{fig:disproof_setup_and_lines}. This initialization changes the game theoretic value, i.e., breaker wins. Hence we collect data both from successful proof and disproof searches. This yields $11076040$ board positions.

\begin{figure}
    \centering
    \includegraphics[width=0.24\textwidth]{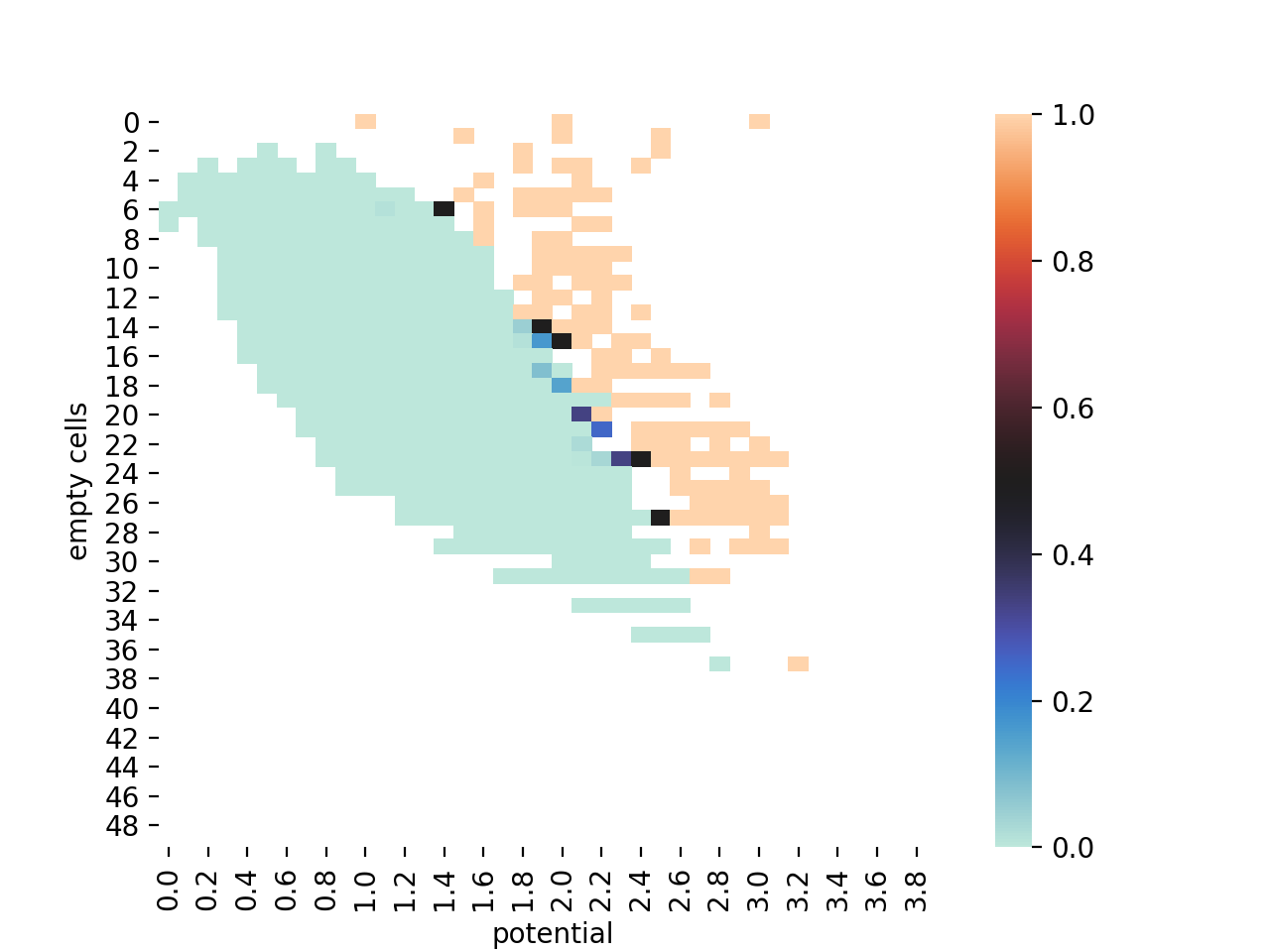}
    \includegraphics[width=0.24\textwidth]{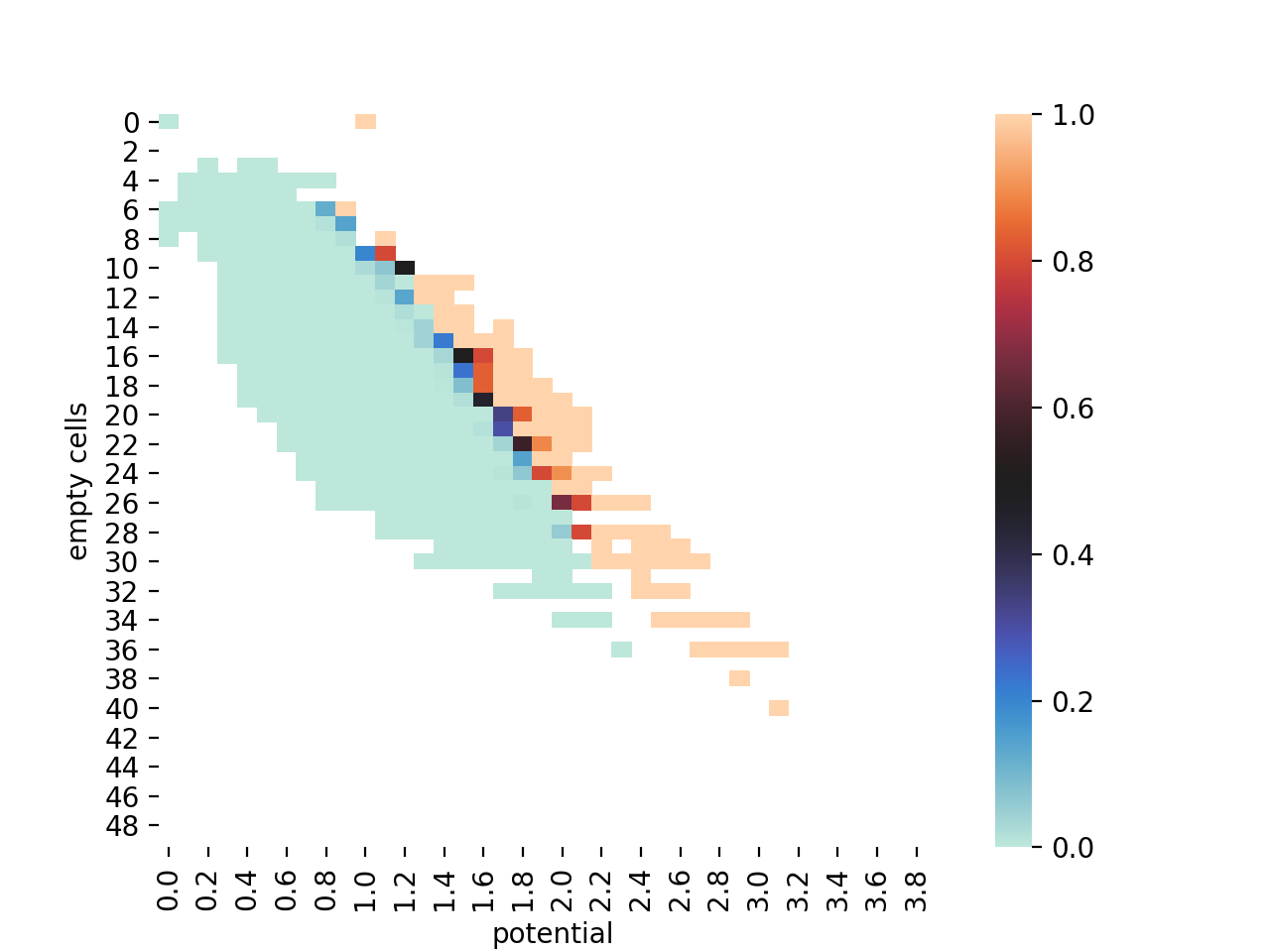}
    \caption{Game theoretic value heatmap as a function of potential (x-axis) and the number of remaining empty cells (y-axis). {\bf Left}: OR nodes (maker moves). {\bf Right}: AND nodes (breaker moves).}
    \label{fig:pns_win_boundary}
    \vspace{-10pt}
\end{figure}

Figure~\ref{fig:pns_win_boundary} plots the game theoretic value as a function of the board potential and the number of remaining empty squares. On both plots, there is a well visible separating plane between maker/breaker win configurations. We estimate the probability of breaker win by fitting a logistic regression curve to this data. Our {\bf heuristic $\pn$} function is obtained by adding this probability to the leaf \pn value.

\begin{align*} 
\vspace{-20pt}
logOdds(b) = & -6.2 - 13.4 \cdot nodeT(b) \\
& - 1.52 \cdot emptyS(b) + 25.83 \cdot pot(b) \\
probBwin(b) = & 1- \frac{1}{1+e^{-logOdds(b)}} \\
pn(b) = & 1 + \beta \cdot probBwin(b)
\end{align*}

\noindent where $nodeT(b)$ is $0$ for AND nodes and $1$ for OR nodes and $emptyS(b)$ is the number of empty squares. The value of $\beta$ is set to $10$ based on a small grid search in $[1, 1000]$.

\section{Experiments}
\label{sec:experiments}

We evaluate our techniques introduced in Section~\ref{sec:pns} separately, as well as jointly on various $(4,n,7^{tr})$ boards. Our two evaluation metrics are 1) \emph{Size} which refers to the number of nodes created during search and 2) \emph{Time} which is the total search time in seconds. We enforce a 1 hour time limit and a 60GB memory limit on each experiment.

In Table~\ref{tab:all_technique} we evaluate the effect of each technique separately on a small board of size $4 \times 7$. Our baseline is PNS extended with a symmetry aware transposition table: we include the transposition table in the baseline because without it PNS quickly runs out of memory even on very small boards. We observe that all techniques, except for isomorphy bring significant improvement both in terms of time and search space size. The most powerful method is forced move, which alone reduces the search space by three orders of magnitude. Note that the component checking algorithm requires that dead squares are eliminated, hence the line corresponding to components contains dead square elim as well.

Checking for isomorphy reduces the search space by around one order of magnitude, however, it brings extra computation that makes the overall search slower. We find that the greater the boards, the less frequently we find isomorphic states, hence we decided to remove isomorphy from later experiments to avoid the added time penalty. Identifying situations where it is still worth checking for isomorphy deserves further analysis which we leave for future work.


\begin{table}[htbp]
\caption{The effect of PNS enhancements on search space and search time, on board $4\times 7$, proof setup}
\begin{center}
\begin{tabular}{ c | c c }
Method & \textbf{Time (s)} & \textbf{Size} \\
\hline 
Baseline PNS & 4.49 & 1921106  \\
Forced move & 0.005 & 822  \\
Dead square elim & 0.98 & 332740  \\
Dominated squares & 0.34 & 116991  \\
Breaker win stop & 1.78 & 660763  \\
Heuristic \pn and \dn & 1.03 & 481553  \\
Components & 0.69 & 167514  \\
Isomorphy & 18.47 & 249348  \\
\hline
\end{tabular}
\label{tab:all_technique}
\end{center}
\end{table}

We call \emph{PNS+} the variant with all techniques except for isomorphy. To better assess the quality of each technique, we run PNS+ on larger boards and we check what happens if the optimisations are removed one by one. The results are shown in Table~\ref{tab:ablation-1-2}, both in proof and disproof setups. Instead of raw time and size values, we indicate ratios with respect to PNS+ to emphasize the performance contribution. In general, leaving out one of the heuristics slows down the proof setup much more than the disproof setup. Designing good heuristics is usually easier for the proof setup, as the disproof setup has much weaker stop-conditions.




\begin{table}[htbp]
\caption{The penalty associated with removing each heuristic. Numbers are ratios with respect to PNS with all heuristics. Values marked with * reached either time or memory limit.}
\begin{center}
\begin{tabular}{ c | r r | r r }
\hline
& \multicolumn{2}{c}{$n=11$} & \multicolumn{2}{c}{$n=12$} \\
& \textbf{Time} & \textbf{Size} & \textbf{Time} & \textbf{Size}\\
\hline 
& \multicolumn{4}{c}{\bf {Proof Setup}} \\
PNS+ & 1.00 & 1.00  & 1.00 & 1.00  \\
Components & 2.69 & 3.01  & 1.30 & 1.61  \\
Breaker win stop & 0.75 & 1.00  & 1.32 & 1.74  \\
Dead square elim & 2.45 & 2.96  & 3.07 & 4.10  \\
Dominated squares & 3.58 & 3.58  & 14.25 & 15.56  \\
Forced move & 85.67 & 69.50  & * & *  \\
Heuristic \pn and \dn & 27.82 & 23.92  & 43.66 & 41.96  \\


\hline 
& \multicolumn{2}{c}{$n=8$} & \multicolumn{2}{c}{$n=9$} \\
& \textbf{Time} & \textbf{Size} & \textbf{Time} & \textbf{Size}\\
\hline 

& \multicolumn{4}{c}{\bf {Disproof Setup}} \\
PNS+ & 1.00 & 1.00  & 1.00 & 1.00  \\
Components & 0.93 & 1.14  & 0.88 & 1.05  \\
Breaker win stop & 1.01 & 1.03  & 0.97 & 0.99  \\
Dead square elim & 1.52 & 2.30  & 1.39 & 2.41  \\
Dominated squares & 1.44 & 2.00  & 2.05 & 2.04  \\
Forced move & 2.37 & 2.38  & 3.50 & 3.31  \\
Heuristic \pn and \dn & 3.23 & 4.32  & 1.11 & 1.92  \\

\hline
\end{tabular}
\label{tab:ablation-1-2}
\end{center}
\end{table}


We use PNS+ configuration and explore what $4\times n$ board configurations we can solve with it. Within our time and memory limit, PNS+ proves that maker wins for $n \leq 14$. The left part of Figure~\ref{fig:proof_results} shows that the search space and required time grows exponentially, in line with our expectations. A similar trend can be observed in the disproof setup, i.e., when breaker wins (right side of Figure~\ref{fig:proof_results}), potentially with a bit slower increasing curve.

We find that the general pattern for maker win is to start from the left and right sides where the extra short lines pose serious threats. As breaker contains these threats, maker's position gets weaker towards the center, but as the two sides meet, it can combine the threats and win there. The larger the $n$, the harder this is for maker.

\begin{figure}
    \centering
    \includegraphics[width=0.38\textwidth]{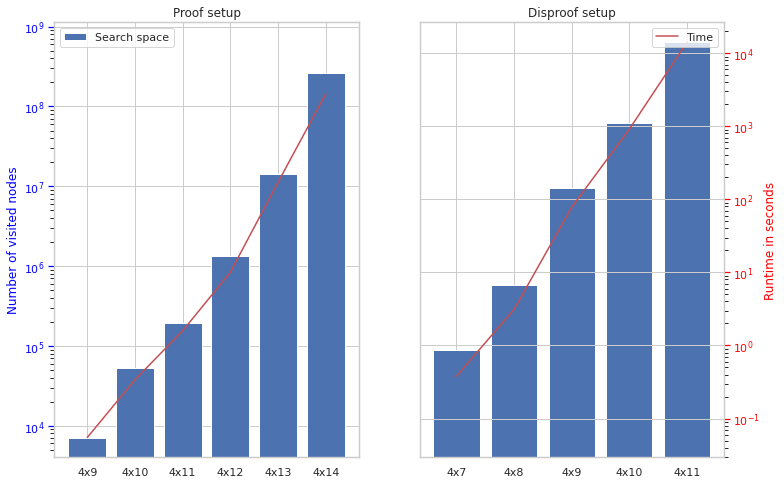}
    \caption{Exponentially increasing runtime and search space size in proof and disproof setups. The logarithmic curve is steeper in proof setup: the gradient of the fitted line is $2.78$ vs $2.74$.}
    \label{fig:proof_results}
    \vspace{-10pt}
\end{figure}


PNS+ proves maker win for all boards on which it terminates, hence it remains hypothetical that increasing $n$ tilts the game towards breaker. To get an estimate of where the turning point might be, we aim to quantify the difficulty of maker win for different $n$ values.

Let us consider the set of all descendants of the root node that are all AND nodes, such that neither of their ancestors are AND nodes. This includes the children of the root, as well as all children of nodes that flipped from AND to OR due to board simplification (via forced moves). We call this set the \emph{support} of the root, as the value of the root depends on this set in that the root is a maker win exactly if one element of the support is a maker win. In Table~\ref{tab:balance} we show what fraction of the root support is breaker win, as $n$ increases. We observe that this fraction is increasing, which supports the conjecture that for some $n$ all nodes in the root support will turn to breaker win, making the whole game breaker win. 

For this last experiment, we impose a 10GB memory limit, as a result of which PNS+ fails on the $4\times 12$ board for 158 starting positions. For these, we repeat the experiment with 125GB, but still get 14 failures. This is why Table~\ref{tab:balance} shows intervals for this board size. Note, however, that maker win positions are typically much faster to prove, so the failed positions are likely breaker win. Hence we conjecture that the values are closer to the higher end of the interval.

\begin{table}[htbp]
\caption{The balance between maker win and breaker win nodes in the support set of the root for different $n$}
\begin{center}
\begin{tabular}{ c | c c c }
\hline
\textbf{Board} & \textbf{Support size} & \textbf{Breaker win}  & \textbf{Breaker win \%}\\
\hline 
4x7 & 308 & 114 & 37\% \\
4x8 & 314 & 150 & 48\% \\
4x9 & 371 & 156 & 42\% \\
4x10 & 415  & 200 & 48\% \\
4x11 & 475  & 248 & 52\% \\
4x12 & 515  & 290(+14 fail) & 56-59\% \\
\hline
\end{tabular}
\label{tab:balance}
\end{center}
\vspace{-10pt}
\end{table}

\section{Conclusion and Future Work}
Our research aims to prove the longstanding conjecture that the 7-in-a-row game is a draw and our paper presents progress towards this proof. We reduce the original game into a small, finite maker-breaker game called $(4,n,7^{tr})$--game, for some arbitrary $n$. We explore Proof Number Search for solving this finite variant and introduce various heuristics to make Proof Number Search more efficient. Our experimental results indicate that maker wins for small $n$ values, however, as $n$ increases, it gets harder for maker. We expect that there is a turning point, i.e. a $n_0$ value such that breaker wins for $n \geq n_0$. However, our current PNS architecture cannot yet scale to large enough $n$ values.

There are several promising directions to improve our results. Some existing search heuristics can be directly encorporated into our system, such as pairing strategies, partial pairings and relevance-zones. 
Another possible direction is to relax strategy partitioning of breaker and allow for some cooperation between different boards. Preliminary experiments suggest that such cooperation makes the game easier for breaker, bringing the turning point closer to what is computationally feasible.

\section*{Acknowledgment}
We are grateful to D\'{a}niel L\'{e}vai for reviewing and optimising our PNS codebase. We also thank Levente Kocsis for his valuable suggestions related to Proof Number Search.

\bibliography{gobang}
\bibliographystyle{plain}

\end{document}